\newtheorem{theorem}{Theorem}
\newenvironment{proof}{{\noindent\it Proof.}\quad}
\title{Memory-Efficient Reversible Spiking Neural Networks}
\author {
    Hong Zhang\textsuperscript{\rm 1},
    Yu Zhang\textsuperscript{\rm 1,\rm 2}\thanks{Corresponding Author.}
}
\begin{document}

\maketitle

\begin{abstract}
Spiking neural networks (SNNs) are potential competitors to artificial neural networks (ANNs) due to their high energy-efficiency on neuromorphic hardware. However, SNNs are unfolded over simulation time steps during the training process. Thus, SNNs require much more memory than ANNs, which impedes the training of deeper SNN models. In this paper, we propose the reversible spiking neural network to reduce the memory cost of intermediate activations and membrane potentials during training. Firstly, we extend the reversible architecture along temporal dimension and propose the reversible spiking block, which can reconstruct the computational graph and recompute all intermediate variables in forward pass with a reverse process. On this basis, we adopt the state-of-the-art SNN models to the reversible variants, namely reversible spiking ResNet (RevSResNet) and reversible spiking transformer (RevSFormer). Through experiments on static and neuromorphic datasets, we demonstrate that the memory cost per image of our reversible SNNs does not increase with the network depth. On CIFAR10 and CIFAR100 datasets, our RevSResNet37 and RevSFormer-4-384 achieve comparable accuracies and consume $3.79\times$ and $3.00\times$ lower GPU memory per image than their counterparts with roughly identical model complexity and parameters. We believe that this work can unleash the memory constraints in SNN training and pave the way for training extremely large and deep SNNs. The code is available at https://github.com/mi804/RevSNN.git.

\end{abstract}

\section{Introduction}

\begin{figure}[htp]
\centering
\includegraphics[width=0.99\columnwidth]{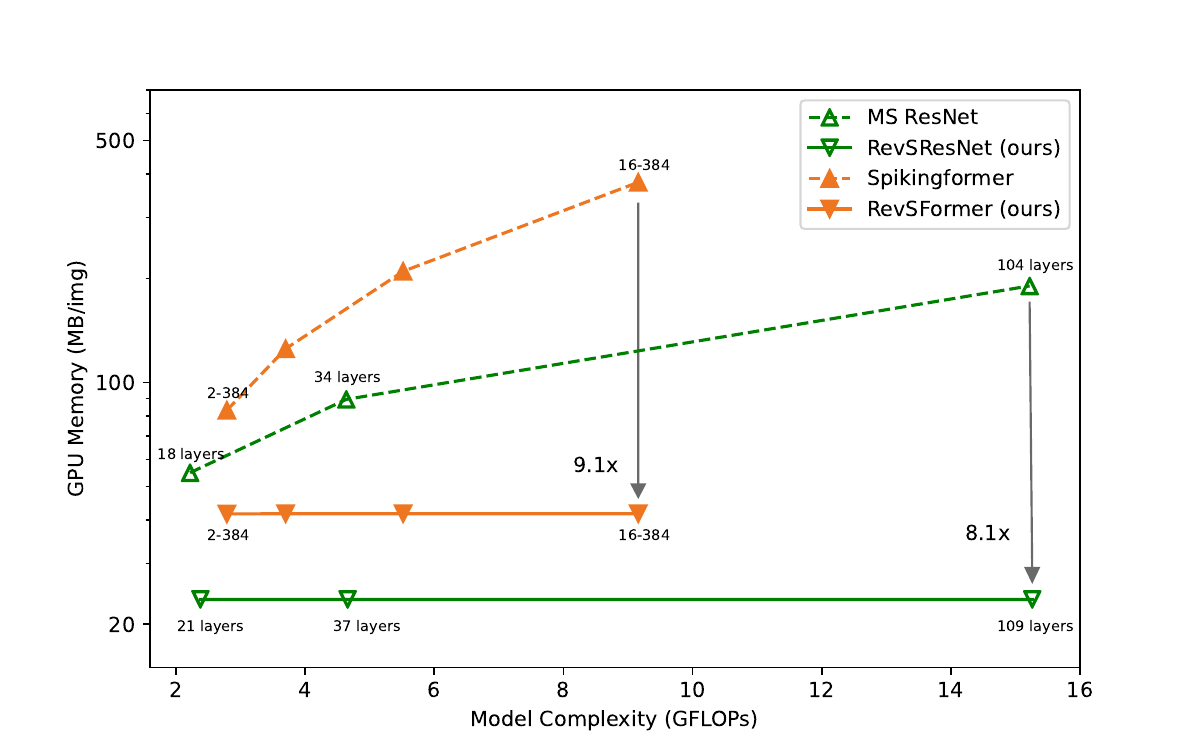} 
\caption{\small{Reversible spiking neural networks are more memory-efficient. Our proposed RevSResNet and RevSFormer need far less GPU memory per image than their non-reversible counterparts during training. Besides, the memory cost of reversible SNNs does not increase with the network depth.}
}
\label{gpu_cost}
\end{figure}

Spiking neural networks (SNNs), brain-inspired models based on binary spiking signals, are regarded as the third generation of neural networks \cite{maass1997networks}. Due to the sparsity and event-driven characteristics, SNNs can be deployed on neuromorphic hardware with low energy consumption. With the help of backpropagation through time framework (BPTT) and surrogate gradient, direct training SNNs are developing towards deeper and larger models. Advanced spiking architectures such as ResNet-like SNNs \cite{hu2021msresnet, sew, SGResNet} and spiking vision transformers \cite{zhou2022spikformer, zhou2023spikingformer} have been proposed in succession, indicating that SNNs are potential competitors to artificial neural networks (ANNs).

Although the inference process of SNNs on neuromorphic chips is relatively mature \cite{davies2018loihi, roy2019towards}, these chips cannot support the training process. Therefore, SNNs are still trained on graphics processing units (GPUs). During the training process based on the BPTT framework, SNNs are unfolded over simulation time steps $T$. Thus, SNNs usually require higher computing resources and memory bandwidth compared to ANNs. The computational requirements can be compensated by some AI accelerators \cite {cupy_learningsys2017} or spending more time in training. However, there is currently no solution to the memory constraints. Under such constraints, some SNNs are trained with a small batch size \cite{scgnet}, indirectly affecting the final accuracy \cite{wu2021rethinking}. Also, deeper SNNs are prevented from training.

The high memory consumption of SNNs comes from several aspects. On the one hand, like ANNs, the memory required by SNNs increases linearly with the depth of the network. The deeper the network, the more parameters and intermediate activations need storage. On the other hand, unlike ANNs, the memory cost of SNNs increases with simulation time step $T$. SNNs need to store $T$ times more intermediate activations, and the membrane potentials of spiking neurons need also to be stored for gradient computation. It is evident that a significant amount of memory consumption comes from storing intermediate activations and membrane potentials \cite{gomez2017reversible}. By reducing this part of the consumption, we can decouple the memory growth from the depth to a large extent.

In this work, we propose reversible spiking neural networks to reduce the memory cost of SNN training. The intention of reversibility is that each layer's input variables and membrane potentials can be re-computed by its output variables. Therefore, even if no intermediate variables are stored, we can quickly reconstruct them through such reversible transformation. In this work, we first extend the reversible architecture \cite{gomez2017reversible} along the temporal dimension to adapt to the BPTT training framework. On this basis, we propose spiking reversible block, which is reversible along spatial dimension and consistent along temporal dimension. Then, we present the reversible spiking ResNet (RevSResNet) and reversible spiking transformer (RevSFormer), which are the reversible counterparts of MS ResNet \cite{hu2021msresnet} and Spikingformer \cite{zhou2022spikformer} (the latest ResNet-like and transformer-like SNNs). As is shown in Figure~\ref{gpu_cost}, our networks consume much less memory per image than their counterparts. We verify the effect of RevSResNet and RevSFormer on static datasets (CIFAR10 and CIFAR100 \cite{cifar}) and neuromorphic datasets (CIFAR10-DVS \cite{li2017dvscifar10} and DVS128 Gesture \cite{dvs_gesture}). The experiments show that RevSResNet and RevSFormer have competitive performance to their non-reversible counterparts. At the same time, our reversible models significantly reduce memory cost during the training process, saving $3.79\times$ on the RevSResNet37 and $3.00\times$ on the RevSFormer-4-384 model.

In summary, our contributions are three-fold.
\begin{itemize}
    \item We analyze the reversibility of SNNs in the spatial and temporal dimensions and propose spiking reversible block for the BPTT framework. On this basis, each block's input and intermediate variables can be calculated by its outputs.
    \item We propose the reversible spiking ResNet (RevSResNet) and reversible spiking transformer (RevSFormer). We redesign a series of structures (such as downsample layers, reversible spiking residual block, and reversible spiking transformer block) to match the performance of the non-reversible state-of-the-art spiking counterparts.
    \item The experimens show that RevSResNet and RevSFormer have competitive performance to their non-reversible counterparts. At the same time, our reversible models significantly reduce memory cost during the training process.
\end{itemize}

\section{Related Works}
\subsection{Spiking Neural Networks}
SNNs utilize binary spikes to transmit and compute information, while the spiking neurons \cite{LIF, yao2022glif} play a crucial role in converting analog membrane potentials into binary spikes. There are two methods to obtain deep SNNs: ANN-to-SNN conversion and direct training. The ANN-to-SNN conversion methods \cite{fast_classify, bu2022optimized, deng2021optimal, wang2022signed} convert the same structured ANNs into SNNs, which usually achieves high accuracy. However, this method is limited because the obtained SNN requires a large time step and is unable to handle neuromorphic data. The direct training method utilizes error backpropagation to train SNNs directly, where the BPTT framework \cite{shrestha2018slayer} and surrogate gradient \cite{neftci2019surrogate} techniques play a vital role. In recent years, direct training spiking structures have been proposed successively, including ResNet-like models \cite{spikingresnet, sew, hu2021msresnet, SGResNet}, Spiking transformers \cite{zhou2022spikformer, zhou2023spikingformer}, NAS SNNs \cite{na2022autosnn, kim2022neural}, etc. These networks have lower latency, but the training process requires more computing resources and memory costs than ANNs. Among them, high memory cost limits the depth and time steps of the network. Thus, this article aims to reduce the memory cost of the SNN training based on reversible architectures.

\subsection{Reversible Architectures}

Reversible architectures are neural networks based on NICE reversible transformation \cite{dinh2014nice}. Reversible ResNet \cite{gomez2017reversible} is the first work that utilizes it for CNN-based image classification tasks. They employ reversible blocks to complete memory-efficient network training. The core of its memory saving is that the intermediate activation can be reconstructed through the reverse process. After that, other works \cite{hascoet2019layer, sander2021momentum, li2021m} have further iterated on the CNN-based reversible architectures. Recently, \cite{mangalam2022reversible} applied the reversible transformation to vision transformers and proposed Rev-ViT and Rev-MViT, two memory-efficient transformer structures. They found that reversible architectures have stronger inherent regularization than their non-reversible counterparts. In addition, reversible transformation has also been adopted in other networks, such as UNet \cite{brugger2019partially}, masked convolutional networks \cite{song2019mintnet}, and graph neural networks \cite{li2021training}.

It is worth noting that the above reversible architectures are reversible in the spatial dimension, in which the forward process propagates from shallow to deep layers, and the reverse process propagates from deep to shallow layers. Unlike them, reversible RNN \cite{revrnn} is reversible in the temporal dimension. It calculates hidden states in the past by reversing them from the future. SNN is a network with both spatial and temporal dimensions, while our spiking reversible block is reversible along the spatial dimension and consistent along the temporal dimension.

\section{Approach}
\begin{figure*}[htp]
\centering
\includegraphics[width=0.80\textwidth]{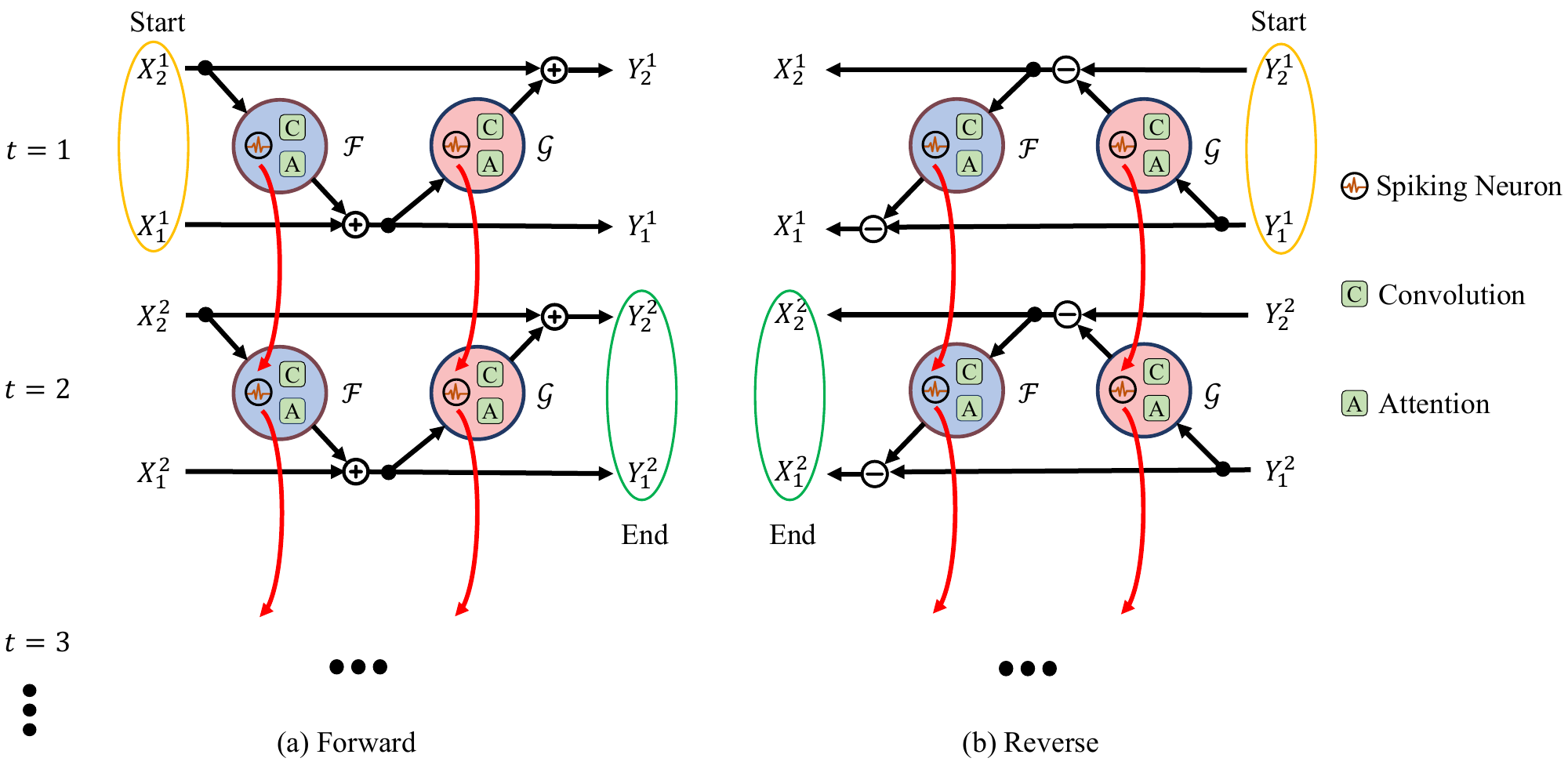} 
\caption{\small{Illustration of the forward (a) and reverse (b) process of our spiking reversible block. We can reconstruct the computational graph and recompute all intermediate variables in forward pass with the reverse process. Note that there is a reset process between them.}
}
\label{RevSNN}
\end{figure*}
In this section, we first explain the spiking neuron model, which is the preliminary of SNNs. Then, we present our proposed spiking reversible block. Furthermore, we apply it to spiking ResNet-like and transformer-like structures and propose the reversible spiking ResNet and reversible spiking transformer. They both support memory-efficient end-to-end training.

\subsection{Spiking neuron model}
\label{spikingneuron}
The spiking neuron, which plays the role of activation function, is the fundamental unit used in SNNs. It converts analog membrane potentials to binary spiking signals. The leaky-integrate-and-fire (LIF) neuron is a widely used spiking neuron whose discrete-time dynamics can be formulated as follows:
\begin{equation}
\label{LIF_model}
H[t]=V[t-1]+\frac{1}{\tau_{m}}\left(I[t]-\left(V[t-1]-V_{reset}\right)\right)
\end{equation}

\begin{equation}
\label{firing}
S[t] =\Theta\left(H[t]-V_{th}\right)
\end{equation}

\begin{equation}
\label{reset}
V[t] =H[t](1-S[t])+V_{reset}S[t]
\end{equation}
where $V[t]$ represents the membrane potential at time $t$, and $H[t]$ is the hidden membrane potential before trigger time $t$. $I[t]$ is the synaptic current, which is the input from other neurons. Once $H[t]$ exceeds the firing threshold $V_{th}$, the neuron will file a spike expressed by $S[t]$. Then, the membrane potential $V[t]$ will be reset to reset potential $V_{rest}$.

In addition to LIF, we also use (integrate-and-fire) IF neuron in this work, which is a simplified version of LIF. Its integrate dynamics (Eq.\ref{IF_model}) differs from LIF, while the fire and reset processes remain unchanged.

\begin{equation}
\label{IF_model}
H[t]=V[t-1]+ I[t]
\end{equation}

\subsection{Spiking reversible block}
\subsubsection{Computation graph of spiking reversible block}

During standard backpropagation training, a single-batch is computed with a forward-backward process. In contrast, for a reversible block, this computation turns to a forward-reverse-backward process. The added reverse process utilizes the output of the block to compute the input in reverse. Then we can delete all inputs and intermediate variables after the forward process and save only the output. RevNet \cite{gomez2017reversible} and RevRNN \cite{revrnn} implement the reversible blocks in the spatial and temporal dimensions, respectively.

For SNNs, as long as the network is designed in a two-residual-stream manner in \cite{gomez2017reversible}, we can establish the reverse process in the spatial dimension. However, in the temporal dimension, the reverse means that the input potential of all neurons must be calculated through their output spikes, which is theoretically impossible for spiking neurons described in Eq.~\ref{LIF_model}. Therefore, spiking reversible block should be reversible along the spatial dimension and consistent along the temporal dimension. We extend the single-batch computation process to forward-reset-reverse-backward. The computation graphs for forward and reverse processes are shown in Figure~\ref{RevSNN}, where $\mathcal{F}$ and $\mathcal{G}$ can be set as arbitrary spiking modules composed of spiking neurons, convolutional layers, fully connected layers, attention mechanisms, etc. Since spiking neurons have different membrane potentials at different time steps, $\mathcal{F}$ and $\mathcal{G}$ vary with time. We use $\mathcal{F}^t$ and $\mathcal{G}^t$ to represent these two modules at the time step $t$.

In the forward process, the starting node of the graph lies in the input node at time step $1$, and the end node is the output at time $T$, where $T$ is the total time steps of the SNN. At each time step $t$, output $Y^t$ is calculated using formula~\ref{forward_p}, as the horizontal arrows in Figure~\ref{RevSNN}a. From time step $t$ to $t+1$, the edges of the computation graph are established through the inherited membrane potential of all spiking neurons in $\mathcal{F}$ and $\mathcal{G}$, as the red arrows illustrate in Figure~\ref{RevSNN}a.
\begin{equation}
\label{forward_p}
\begin{aligned}
& Y_1^t=X_1^t+\mathcal{F}^t\left(X_2^t\right) \\
& Y_2^t=X_2^t+\mathcal{G}^t\left(Y_1^t\right)
\end{aligned}
\end{equation}

Before the reverse process, all spiking neurons are reset by resetting membrane potential to the initial state, which is named the reset process. 

In the reverse process, the starting node of the graph lies in the output node at time step $1$, and the end node is the input at time step $T$. For each time step $t$, input $X^t$ is calculated using formula~\ref{reverse_p}, as the reversed horizontal arrows in Figure~\ref{RevSNN}b. From time step $t$ to $t+1$, same as forward process, the edges of the computation graph are established through the inherited membrane potential of all spiking neurons in $\mathcal{F}$ and $\mathcal{G}$, as the red arrows show in Figure~\ref{RevSNN}b.

\begin{equation}
\label{reverse_p}
\begin{aligned}
& X_2^t=Y_2^t-\mathcal{G}^t\left(Y_1^t\right) \\
& X_1^t=Y_1^t-\mathcal{F}^t\left(X_2^t\right)
\end{aligned}
\end{equation}

\subsubsection{Learning without caching intermediate variables}
During network training, the backward process is essential for updating the network weights. Consider the presynaptic weight $W_l$ of a spiking neuron in the $l_{th}$ layer. Its gradient is calculated as follows:

\begin{equation}
\frac{\partial L}{\partial W_l}=\sum_t\left(\frac{\partial L}{\partial S_l^t} \frac{\partial S_l^t}{\partial U_l^t}+\frac{\partial L}{\partial U_l^{t+1}} \frac{\partial U_l^{t+1}}{\partial U_l^t}\right) \frac{\partial U_l^t}{\partial W_l}
\end{equation}

where $S_l^t$ and $ U_l^t$ are the output spike (activation) and membrane potential at time step $t$, which are calculated using the spiking neuron dynamics. It can be found that the gradient calculation requires all output spikes and membrane potentials at all time steps. In fact, almost all intermediate variables in the forward process are needed in the backward process. In standard training, these variables are cached in GPU memory after the forward process. Because of the sequential nature of the network, all intermediate variables for all layers at all time steps should be stored. Thus, peak memory usage becomes linearly dependent on the network depth $D$ and time steps $T$. Its spatial complexity is $O(D \cdot T)$.

For the training of the spiking reversible block, we propose Theorem~\ref{the1_app}, which means all intermediate variables in the forward process can be recomputed from output in the reverse process. Then, only output $Y$ needs caching in the forward process. Furthermore, if spiking reversible blocks are sequentially placed, we only need to store the output of the last block. Before the backward process of any block, we can recompute all intermediate variables with the output. In this process, the peak memory usage is the memory required for a single block whose spatial complexity is $O(T)$. Since direct training SNNs often have relatively small $T$ (such as 4), the peak memory usage during training is much smaller.
\begin{theorem}
\label{the1}
Consider a spiking reversible block with $T$ time steps, if the forward and reverse functions are formulated as Eq.~\ref{forward_p} and Eq.~\ref{reverse_p}, and outputs of forward process are fed into the reverse process, then $X^t$, $Y^t$ and all intermediate variables (including the intermediate activations and membrane potentials) in $\mathcal{F}^t$ and $\mathcal{G}^t$ in the forward process are the identical to those in the reverse process.
\end{theorem}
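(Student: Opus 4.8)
The plan is to prove the statement by induction on the time step $t\in\{1,\dots,T\}$, with the key conceptual point being that $\mathcal{F}^t$ and $\mathcal{G}^t$ are \emph{stateful}: each carries the membrane potentials of its constituent spiking neurons, inherited from time step $t-1$ via the neuron dynamics in Eqs.~\ref{LIF_model}--\ref{reset} (or Eq.~\ref{IF_model} in the IF case). So I will not treat $\mathcal{F}^t,\mathcal{G}^t$ as ordinary functions of their arguments; instead I regard the evaluation of a module at time $t$ as a deterministic map sending the pair (input, \emph{entering} membrane-potential state) to the triple (output, all internal activations and potentials, \emph{exiting} state), and I track the entering/exiting states explicitly. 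Since $\mathcal{F}$ and $\mathcal{G}$ have disjoint internal state, the fact that the forward recursion Eq.~\ref{forward_p} evaluates $\mathcal{F}^t$ before $\mathcal{G}^t$ whereas the reverse recursion Eq.~\ref{reverse_p} evaluates $\mathcal{G}^t$ before $\mathcal{F}^t$ is immaterial to the argument.

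The induction hypothesis at step $t$ reads: (i) the entering states of $\mathcal{F}$ and of $\mathcal{G}$ at time step $t$ are identical in the forward and in the reverse process; and (ii) for all $t'<t$, the quantities $X_1^{t'},X_2^{t'},Y_1^{t'},Y_2^{t'}$ and every internal activation and membrane potential of $\mathcal{F}^{t'}$ and $\mathcal{G}^{t'}$ agree between the two processes. For the base case $t=1$, part (ii) is vacuous, and part (i) holds because the reset process places both modules in the initial state at the start of the reverse process, which is exactly the state in which the forward process begins at $t=1$.

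For the inductive step, assume the hypothesis at $t$. The value $Y_1^t$ is available to the reverse process, being the first component of the cached block output $Y^t=(Y_1^t,Y_2^t)$, and it is precisely the argument on which the forward process evaluated $\mathcal{G}^t$. By (i) the entering state of $\mathcal{G}$ matches, so by determinism of the LIF/IF update (Eqs.~\ref{LIF_model}--\ref{reset}) the output $\mathcal{G}^t(Y_1^t)$, all internal variables of $\mathcal{G}^t$, and the exiting state of $\mathcal{G}$ coincide in the two processes. Then Eq.~\ref{reverse_p} gives the reverse value $X_2^t=Y_2^t-\mathcal{G}^t(Y_1^t)$, while Eq.~\ref{forward_p} gives $Y_2^t=X_2^t+\mathcal{G}^t(Y_1^t)$; rearranging, the reconstructed $X_2^t$ equals the forward $X_2^t$. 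Feeding this common $X_2^t$ into $\mathcal{F}^t$, whose entering state matches by (i), the same determinism argument gives that $\mathcal{F}^t(X_2^t)$, its internal variables, and the exiting state of $\mathcal{F}$ agree; and then Eq.~\ref{reverse_p} together with the forward relation $Y_1^t=X_1^t+\mathcal{F}^t(X_2^t)$ shows that the reconstructed $X_1^t=Y_1^t-\mathcal{F}^t(X_2^t)$ equals the forward $X_1^t$. Hence all variables at time step $t$ agree, and the exiting states of $\mathcal{F}$ and $\mathcal{G}$ — which are their entering states at step $t+1$ — agree, which is the hypothesis at $t+1$. Running $t$ up to $T$ finishes the proof.

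The main obstacle is exactly the statefulness flagged above: one is tempted to "cancel" the $+\mathcal{F}^t$ of Eq.~\ref{forward_p} against the $-\mathcal{F}^t$ of Eq.~\ref{reverse_p} directly, but this is not legitimate a priori, since the numerical value of $\mathcal{F}^t$ depends on membrane potentials accumulated over $1,\dots,t-1$, and those potentials are themselves built from the very quantities being reconstructed. The induction is what makes the cancellation rigorous — it certifies that at every $t$ the module is driven from the same entering state by the same input, so the value subtracted in reverse is literally the value added in forward. A secondary point worth stating carefully is that in the reverse process $\mathcal{G}^t$ must be evaluated on $Y_1^t$ itself and not on any reconstructed surrogate; this is legitimate only because the full output sequence $\{Y^t\}_{t=1}^T$ of the block is what gets cached (whence the $O(T)$ storage per block), and because the reset process secures the base case.
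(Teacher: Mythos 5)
Your proof is correct and follows essentially the same route as the paper's: induction over time steps, with the reset process securing the base case and the determinism of the stateful modules (equal entering membrane potentials plus equal inputs give equal outputs, internals, and exiting potentials) justifying the cancellation that reconstructs $X_2^t$ and then $X_1^t$. Your version is somewhat more explicit about formalizing the modules as maps on (input, state) pairs and about why the differing evaluation order of $\mathcal{F}$ and $\mathcal{G}$ in the two passes is harmless, but the underlying argument is the same.
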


\begin{proof}
The proof of Theorem~\ref{the1} is presented in the Appendix.
\end{proof}

\subsection{Reversible spiking residual neural network}

\begin{figure}[tp]
\centering
\includegraphics[width=0.8\columnwidth]{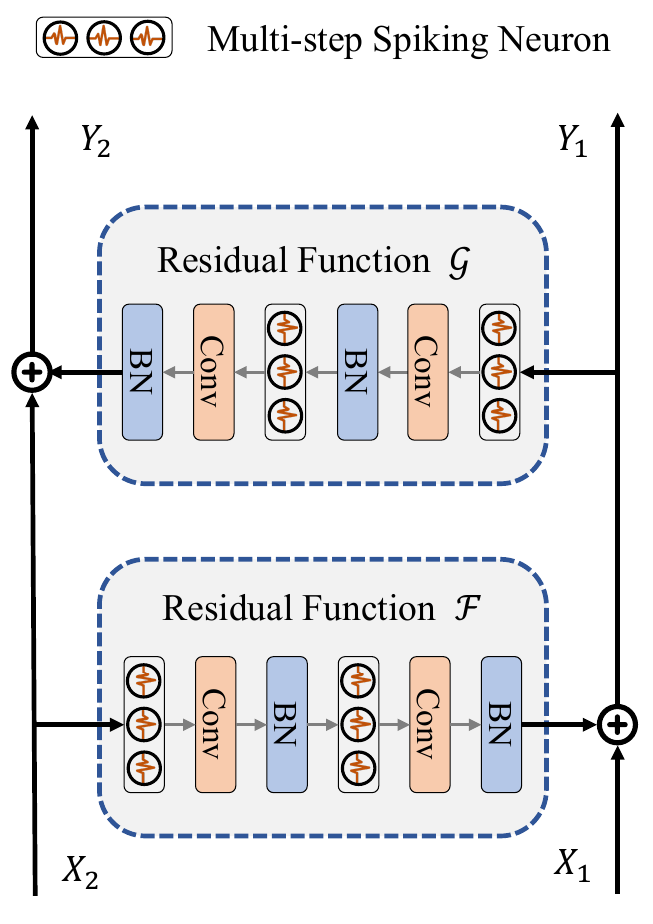} 
\caption{\small{Basic block of RevSResNet. We utilize two residual functions with the same structure as $\mathcal{F}$ and $\mathcal{G}$.}
}
\label{RevSResNet}
\end{figure}

ResNet \cite{he2016deep} is one of the most popular deep convolutional neural networks (CNNs), and residual learning is also the best solution for CNN-based SNNs to tackle the gradient degradation problem \cite{sew}. With the help of our spiking reversible block, we propose the reversible spiking residual neural network, which completes the training of deep SNNs with much less memory usage.
\subsubsection{Basic block}
In ANN ResNet, the parameterized residual function is wrapped around a single residual stream in each block. We adopt it to the spiking reversible block and propose the two-residual-stream architecture in Figure~\ref{RevSResNet}. The input $X$ is partitioned into tensors $X_1$ and $X_2$ in halves along the channel dimension. The forward process follows transformation in Eq.~\ref{forward_p} to ensure reversibility. We utilize two residual functions with the same structure as $\mathcal{F}$ and $\mathcal{G}$. To ensure that all operations are spike computations, we adopt the Activation-Conv-BatchNorm paradigm \cite{hu2021msresnet}. Each residual function consists of two sequentially connected multi-step spiking neurons, convolutional layers, and batch normalization.

\subsubsection{Downsample block}
Due to the reversibility of the basic block, the feature dimensions of $X$ and $Y$ are identical. Therefore, residual functions $\mathcal{F}$ and $\mathcal{G}$ must be equidimensional in input and output spaces, which means that downsample layers (such as maxpooling or convolution with a stride of 2) cannot appear in spiking reversible blocks. To replace the downsampling basic blocks in ResNet, we set up a downsample block at the start of the stages where downsampling is required. We first use a $3 \times 3$ average pooling with a stride of 2 to downsample the image scale and then increase the feature channels using a $1\times 1$ convolutional layer with a stride of 1.

\subsubsection{Network architecture}
The high-level structure of RevSResNet is the same as its non-reversible counterpart MS ResNet \cite{hu2021msresnet}. The first convolution is regarded as the encoding layer which performs the initial downsampling. Then the spiking features propagate through the four stages with basic blocks. We set up a downsample block at the start of the second to fourth stages. The network ends with an average pooling and fully connected layer. 

When spiking reversible blocks are sequentially connected (we call it reversible sequence), we only need to store the output of the last block to complete the training. Leave out the downsample block, all stages in RevSResNet are reversible sequences. No matter how the number of blocks in a reversible sequence grows, the memory usage required by intermediate variables does not increase. The detailed architectures of RevSResNet are summarized in Table~\ref{details}. RevSResNet-$N$ means the network with $N$ layers.

\begin{table}[]
\centering
\begin{tabular}{c|c}
\hline \hline
      Total layers                & $N = 5 + 4 * \sum n_i$                  \\ \hline
conv1                 & 3$\times$3, 128                  \\ \hline
reversible sequence 1 & \multicolumn{1}{c}{\begin{tabular}[c]{@{}c@{}}$\left(\begin{array}{l}3 \times 3,64 \\ 3 \times 3,64\end{array}\right) \times 2 \times n_1 $\end{tabular}}                        \\ \hline
reversible sequence 2 & \multicolumn{1}{c}{\begin{tabular}[c]{@{}c@{}}$\left(\begin{array}{l}3 \times 3,128 \\ 3 \times 3,128\end{array}\right)^{*} \times 2 \times n_2 $\end{tabular}}                        \\ \hline
reversible sequence 3 & \multicolumn{1}{c}{\begin{tabular}[c]{@{}c@{}}$\left(\begin{array}{l}3 \times 3,256 \\ 3 \times 3,256\end{array}\right)^{*} \times 2 \times n_3 $\end{tabular}}                        \\ \hline
reversible sequence 4 & \multicolumn{1}{c}{\begin{tabular}[c]{@{}c@{}}$\left(\begin{array}{l}3 \times 3,448 \\ 3 \times 3,448\end{array}\right)^{*} \times 2 \times n_4 $\end{tabular}}                        \\ \hline
                      & average pool, fc, softmax \\ \hline \hline
\end{tabular}
\caption{Architectures of RevSResNet. The stride of conv1 are set to 2 for downsampling. $*$ means that a downsample block is set at the beginning of the reversible sequence. $N$ represents the total number of layers.}
\label{details}
\end{table}

\subsection{Reversible spiking transformer}

\begin{figure}[htp]
\centering
\includegraphics[width=0.8\columnwidth]{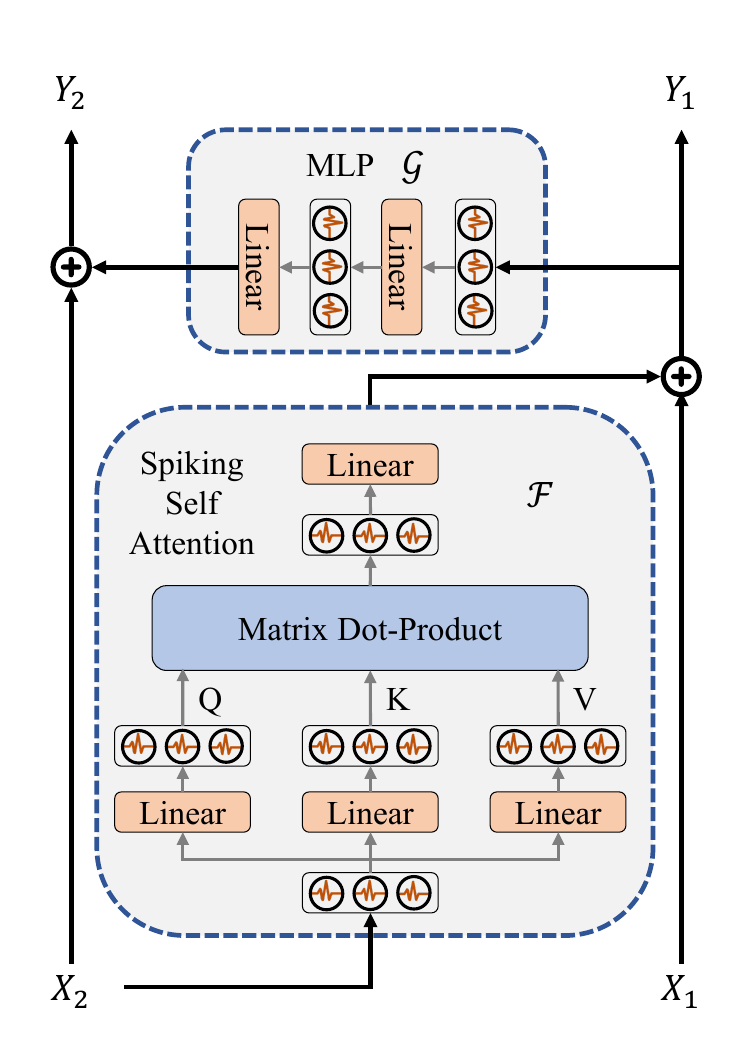} 
\caption{\small{Basic block of our RevSFormer. We consider spiking self-attention and spiking MLP block as $\mathcal{F}$ and $\mathcal{G}$}, respectively.
}
\label{RevSFormer}
\end{figure}

Vision transformer has taken the accuracy of computer vision tasks to a new level. Combining our spiking reversible block with the spiking transformer \cite{zhou2023spikingformer}, we propose RevSFormer and prove the feasibility of reversible structures in transformer-like SNNs.

\subsubsection{Basic block}
Unlike ResNet, a spiking transformer block has two relatively independent residual functions: spiking self-attention (SSA) and spiking MLP block (MLP). They are wrapped around their residual connection, respectively. Under this condition, we respectively consider SSA and MLP as $\mathcal{F}$ and $\mathcal{G}$, and propose the basic block in RevSFormer, as is shown in Figure~\ref{RevSFormer}. We adopt the same SSA and MLP structure as Spikingformer \cite{zhou2023spikingformer}, so our basic block's computational complexity and parameter numbers are consistent with the original spiking transformer block.

\subsubsection{Network structure}
The high-level structure of RevSFormer is the same as its non-reversible counterpart Spikingformer. The network includes a spiking tokenizer, $L$ basic blocks, and a classification head. The spiking tokenizer computes the patch embedding of the image and projects the embedding into a fixed size with several convolutional and maxpooling layers. The classification head is composed of a spiking neuron and a fully connected layer. It is worth mentioning that all downsampling operations of RevSFormer are placed in the spiking tokenizer. Since there are no other downsampling or irreversible operations between all basic blocks, RevSFormer has only one reversible sequence composed of $L$ basic blocks. As $L$ grows, the memory required to store intermediate variables is expected to stay the same. The detailed configurations of RevSFormer are the same as Spikingfomer. And RevSFormer-$L$-$D$ means the network has $L$ blocks and the embedding dimension is $D$.

\section{Experiments}
\begin{table*}[htp]
\centering

\scalebox{0.93}{
\begin{tabular}{lcccclcc}
\hline
Methods           & Architecture        & \begin{tabular}[c]{@{}c@{}}Param\\ (M)\end{tabular} & \begin{tabular}[c]{@{}c@{}}Time\\ Step\end{tabular} & \begin{tabular}[c]{@{}c@{}}FLOPS\\ (G)\end{tabular} & \begin{tabular}[c]{@{}c@{}}Memory\\ (MB/img)\end{tabular} & \begin{tabular}[c]{@{}c@{}}CIFAR10\\ Top-1 Acc\end{tabular} & \begin{tabular}[c]{@{}c@{}}CIFAR100\\ Top-1 Acc\end{tabular} \\ \hline
Hybrid training \cite{hybrid_training}   & VGG-11              & 9.27                                                & 125                                                 & -                                                   & -                                                         & 92.22                                                       & 67.87                                                       \\
Diet-SNN \cite{dietsnn}          & ResNet-20           & 0.27                                                & 10                                                  & -                                                   & -                                                         & 92.54                                                       & 64.07                                                       \\
STBP  \cite{stbp}            & CIFARNet            & 17.54                                               & 12                                                  & -                                                   & -                                                         & 89.83                                                       & -                                                           \\
STBP NeuNorm \cite{neuronorm}     & CIFARNet            & 17.54                                               & 12                                                  & -                                                   & -                                                         & 90.53                                                       & -                                                           \\
TSSL-BP  \cite{tssl}        & CIFARNet            & 17.54                                               & 5                                                   & -                                                   & -                                                         & 91.41                                                       & -                                                           \\
STBP-tdBN  \cite{tdbn}       & ResNet-19           & 12.63                                               & 4                                                   & -                                                   & -                                                         & 92.92                                                       & 70.86                                                       \\
TET  \cite{tet}             & ResNet-19           & 12.63                                               & 4                                                   & -                                                   & -                                                         & 94.44                                                       & 74.47                                                       \\
DS-ResNet \cite{dsresnet}        & ResNet20    & 4.32                                                & 4                                                   & -                                                   & -                                                         & 94.25                                                       & -                                                       \\ 
Spikformer \cite{zhou2022spikformer}        & Spikformer-4-384    & 9.32                                                & 4                                                   & -                                                   & -                                                         & 95.19                                                       & 77.86                                                       \\ \hline
MS ResNet \cite{hu2021msresnet}         & MS ResNet18         & 11.22                                               & 4                                                   & 2.22                                                & 54.83                                                     & 94.40                                                       & 75.06                                                       \\
 
RevSResNet (ours) & RevSResNet21        & 11.05                                               & 4                                                   & 2.38                                                & \textbf{23.59 \small$\downarrow 2.32 \times$}                                            & 94.53                                                       & 75.46                                                       \\
MS ResNet \cite{hu2021msresnet}         & MS ResNet34         & 21.33                                             & 4                                                   & 4.64                                                & 89.33                                                     & 94.69                                                       & 75.34                                                       \\
 
RevSResNet (ours) & RevSResNet37        & 23.59                                               & 4                                                   & 4.66                                                & \textbf{23.58 \small$\downarrow 3.79 \times$}                                            & 94.77                                                       & 76.34                                                       \\ \hline
Spikingformer \cite{zhou2023spikingformer}     & Spikingformer-2-384 & 5.76                                                & 4                                                   & 2.79                                                & 83.05                                                     & 95.12                                                       & 77.96                                                       \\

RevSFormer (ours) & RevSFormer-2-384    & 5.76                                                & 4                                                   & 2.79                                                & \textbf{41.68 \small$\downarrow 1.99 \times$}                                            & 95.29                                                       & 78.04                                                       \\
Spikingformer \cite{zhou2023spikingformer}     & Spikingformer-4-384 & 9.32                                                & 4                                                   & 3.70                                                & 125.06                                                    & 95.35                                                       & 79.02                                                       \\
 
RevSFormer (ours) & RevSFormer-4-384    & 9.32                                                & 4                                                   & 3.70                                                & \textbf{41.74 \small$\downarrow 3.00 \times$}                                            & 95.34                                                       & 79.04                                                       \\ \hline
\end{tabular}}
\caption{Comparison to prior works on static datasets, CIFAR-100 and CIFAR10. Note that results of MS ResNet and Spikingformer are based on our implementation for a fair comparison. Bold values denotes the memory usage of our reversible SNNs.}
\label{CIFAR10_100}
\end{table*}

We evaluate the performance of our reversible structures on static datasets (CIFAR10 and CIFAR100) and neuromorphic datasets (CIFAR10-DVS and DVS128 Gesture). The metrics include parameters, time steps, FLOPS, memory per image, and the top-1 accuracy. The memory per image is measured as the peak GPU memory each image occupies during training. To ensure direct comparability with non-reversible counterparts, we match the model complexity (FLOPS in metric) and number of parameters as closely as possible. The dataset introduction, detailed network configuration, and other experimental settings are presented in the Appendix.

\subsection{Experiment on static datasets}
CIFAR10 and CIFAR100 each provides 50000 train and 10000 test images. On these datasets, we establish two comparisons (MS ResNet18 vs. RevSResNet21, MS ResNet34 vs. RevSResNet37) for ResNet-like structures. For transformer-like structures, the network configuration and model complexity of RevSFormer are identical to Spikingformer. Results are shown in Table~\ref{CIFAR10_100}.

From an accuracy perspective, we find that the performance of RevSResNet and RevSFormer is comparable to their counterparts with similar complexity. RevSResNet37 achieves 94.77\% and 76.34\% accuracy on CIFAR10 and CIFAR100 datasets, respectively, while RevSFormer-4-384 achieves 95.34\% and 79.04\% accuracy with a time step of 4. The performance of RevSResNet and RevSFormer is even slightly better than MS ResNet and SpikingFormer, which may be due to stronger inherent regularization of reversible architectures than vanilla networks \cite{mangalam2022reversible}.

From the memory perspective, our reversible SNNs are much more memory-efficient than vanilla SNNs. On one hand, RevSResNet37 and RevSFormer-4-384 consume 23.58 and 41.74 MB GPU memory per image, which is $3.79 \times$ and $3.00\times$ lower than their counterparts. On the other hand, the memory usage does not increase with depth in our networks, which will be further discussed later.
\begin{table*}[tp]
\centering

\scalebox{0.94}{
\begin{tabular}{lclcccc}
\hline
                                            &                                                                       &                                                                             & \multicolumn{2}{c}{CIFAR10-DVS}          & \multicolumn{2}{c}{DVS128 Gesture}                            \\ \cline{4-7} 
\multirow{-2}{*}{Methods}                   & \multirow{-2}{*}{\begin{tabular}[c]{@{}c@{}}FLOPS\\ (G)\end{tabular}} & \multirow{-2}{*}{\begin{tabular}[c]{@{}c@{}}Memory\\ (MB/img)\end{tabular}} & Time Step & Top-1 Acc                    & Time Step                  & Top-1 Acc                     \\ \hline
LIAF-Net \cite{wu2021liaf}                                    & -                                                                     & -                                                                           & 10        & 70.40                         & 60                         & 97.56                          \\
TA-SNN \cite{TAsnn}                                      & -                                                                     & -                                                                           & 10        & 72.00                           & 60                         & 98.61                          \\
Rollout  \cite{Rollout}                                   & -                                                                     & -                                                                           & 48        & 66.75                         & 240                        & 97.16                          \\
tdBN \cite{tdbn}                                       & -                                                                     & -                                                                           & 10        & 67.80                         & 40                         & 96.87                          \\
PLIF \cite{plif}                                       & -                                                                     & -                                                                           & 20        & 74.80                         & 20                         & 97.57                          \\
SEW ResNet \cite{sew}                                 & -                                                                     & -                                                                           & 16        & 74.40                         & 16                         & 97.92                          \\
Dspike \cite{Dspike}                                     & -                                                                     & -                                                                           & 10        & 75.40                         & -                          & -                             \\
DSR \cite{dsr}                                        & -                                                                     & -                                                                           & 10        & 77.27                         & -                          & -                             \\
DS-ResNet \cite{dsresnet}                                 & -                                                                     & -                                                                           & 10        & 70.36                         & 40                         & 97.29                          \\
Spikformer \cite{zhou2022spikformer}                                 & -                                                                     & -                                                                           & 16        & 80.60                         & 16                         & 97.90                          \\ \hline
MS ResNet20     \cite{hu2021msresnet}                             & 0.42                                                                  & 50.72                                                                       & 10        & 76.00                         & 10                         & 94.79                         \\
 
RevSResNet24 (ours) & 0.43                                                                  & \textbf{24.97 \small$\downarrow 2.03 \times$}                                                              & 10        & 75.50 & 10                         & 94.44 \\
MS ResNet20 \cite{hu2021msresnet}                                & 0.67                                                                  & 79.38                                                                       & 16        & 75.80                         & 16                         & 97.57                         \\

RevSResNet24 (ours)                         & 0.69                                                                  & \textbf{39.52 \small$\downarrow 2.01 \times$}                                                              & 16        & 76.40                         & 16                         & 96.53                         \\ \hline
Spikingformer-2-256    \cite{zhou2023spikingformer}                     & 3.78                                                                  & 295.73                                                                      & 10        & 78.50                         & 10                         & 96.88                         \\

RevSFormer-2-256 (ours)                     & 3.78                                          & \textbf{227.50 \small$\downarrow 1.30 \times$}                                     & 10        & 81.40                         & 10                         & 97.22                         \\
Spikingformer-2-256   \cite{zhou2023spikingformer}                        & 6.05                                                                  & 466.08                                                                      & 16        & 80.30  & 16        & 98.26                         \\
 
RevSFormer-2-256 (ours)                     & 6.05                                          & \textbf{359.58 \small$\downarrow 1.30 \times$}                                     & 16        & 82.20                         & 16                         & 97.57                         \\ \hline
\end{tabular}}
\caption{Comparisons with prior works on neuromorphic datasets, CIFAR10-DVS and DVS128 Gesture. Note that results of MS ResNet and Spikingformer are based on our implementation for a fair comparison. Bold values denote the memory usage of our reversible SNNs.}
\label{dvsdataset}
\end{table*}

\subsection{Experiment on neuromorphic datasets}
On the neuromorphic datasets, we conduct experiments with two different time steps, 10 and 16. And we establish one network comparison (MS ResNet20 vs. RevSResNet24) for ResNet-like structures. For transformer-like structures, the network configuration are identical between reversible and non-reversible structures. 

Results are shown in Table~\ref{dvsdataset}. The relative changes in accuracy and memory are similar to those on static datasets. Our RevSResNet and RevSFormer achieve a memory usage reduction of $2.01 \times$ and $1.30 \times$, respectively. And the magnitude of the reduction stays consistent across different time steps. In terms of performance, RevSResNet24 and RevSFormer-2-256 achieve 76.4\% and 82.2\% accuracy on CIFAR10-DVS dataset with a time step of 16.

\subsection{Ablation studies}
\subsubsection{Memory usage vs. depth}
Theoretically, for a reversible sequence, the memory usage required by intermediate variables does not increase with the number of reversible blocks because we only need to save the output of the whole sequence. Thus, for RevSResNet with 4 reversible sequences and RevSFormer with 1 sequence, the memory usage per image should not increase with depth. Figure~\ref{gpu_cost} plots the memory usage for our reversible SNNs and their counterparts. For ResNet-like structures, the relative memory saving magnitude increases up to $8.1\times$ as the model goes deeper. For transformer networks, our RevSFormer-16-384 saves $9.1\times$ GPU memory per image. It is expected that this memory saving magnitude will increase further with increasing depth.

\subsubsection{Memory usage vs. time step}
The memory required by an SNN is $T$ times larger than an ANN. Thus, the GPU memory required per image grows linearly with the total time steps $T$. Figure~\ref{mem_t} shows the relationship between memory usage and time steps. As is seen, for each model, the memory usage increases with a certain slope $m$. In our reversible SNNs, intermediate variables in the non-reversible parts (e.g., the downsample layers and the spiking tokenizer) and the output of each reversible sequence still need caching. Thus, memory usage is not decoupled from time steps $T$. However, through reversible architecture, we have greatly reduced the slope of memory usage growth from 28.5 and 20.2 of non-reversible SNNs to 9.6 and 5.3 of our reversible networks.

\begin{figure}[htp]
\centering
\includegraphics[width=0.95\columnwidth]{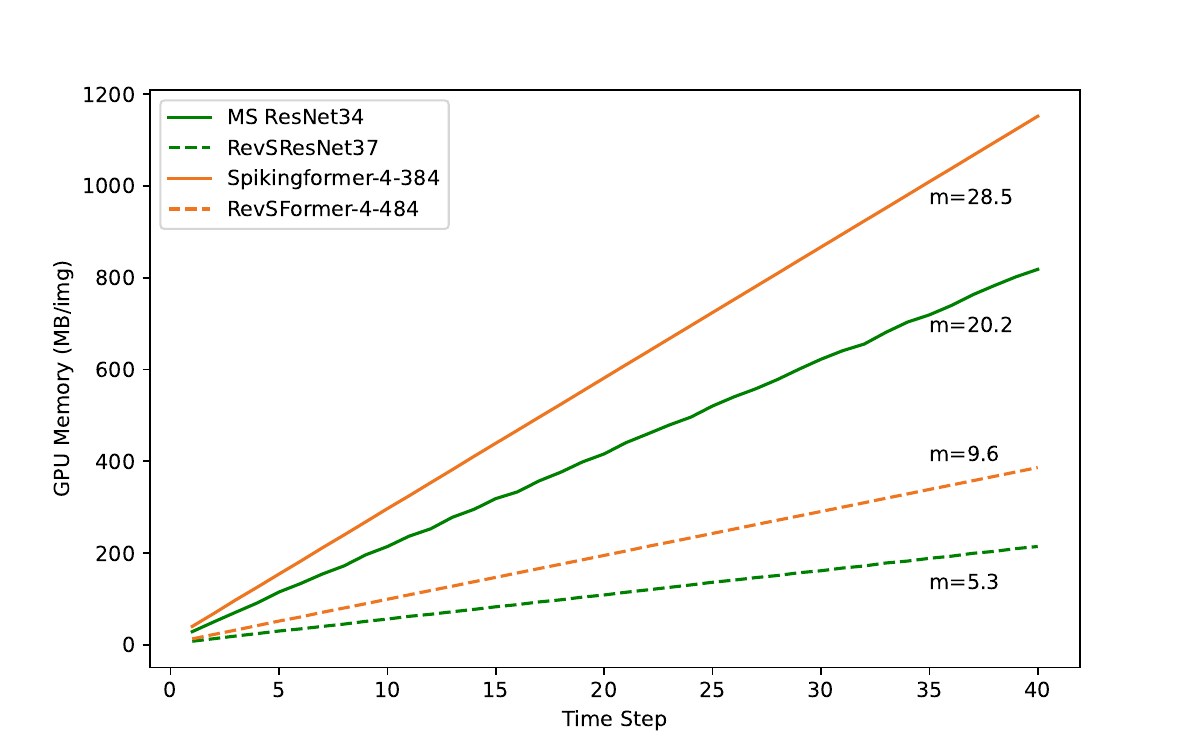} 
\caption{\small{The relationship between GPU memory and time step.}
}
\label{mem_t}
\end{figure}

\subsubsection{Computational overhead during training}

In general, for a network with $N$ operations, the forward and backward processes take $N$ and $2N$ operations approximately \cite{gomez2017reversible}. Our spiking reversible block requires the extra reset and reverse processes. The reset process take negligible operations and the reverse process take $N$ operations, same as forward. In summary, the reversible architectures need roughly 33\% more computations than vanilla networks during training. Besides, reversible SNNs have larger maximum batch size, which may slightly influence the training speed and final performance \cite{wu2021rethinking}.

The training time and maximum batch size of our reversible SNNs and their counterparts are shown in Table~\ref{trainingtime}. The values are measured on a single 24GB RTX3090 GPU under CIFAR10 dataset. Our RevSResNet37 takes $1.33\times$ more training time in practice. Besides, it achieves a $2.69\times$ increase in maximum batch size, and the increase magnitude will go larger on bigger models.

\begin{table}[htp]

\begin{tabular}{lll}
\hline
Architecture        & \begin{tabular}[c]{@{}l@{}}Training time\\ (seconds / epoch)\end{tabular} & \begin{tabular}[c]{@{}l@{}}Maximum\\ Batch size\end{tabular} \\ \hline
MS ResNet34         & 98                                                                      & 239                                                            \\
RevSResNet37        & \textbf{131 \small$\uparrow 1.33 \times$}                                                             & \textbf{644 \small$\uparrow 2.69 \times$}                                                  \\ \hline
Spikingformer-4-384 & 105                                                                      & 164                                                            \\
RevSFormer-4-384    & \textbf{133 \small$\uparrow 1.27 \times$}                                                             & \textbf{286 \small$\uparrow 1.74 \times$}                                                   \\ \hline
\end{tabular}
\caption{The training time and maximum batch size of our reversible structures and their non-reversible counterparts.}
\label{trainingtime}
\end{table}
\section{Conclusion}
In this paper, we propose the reversible spiking neural network to reduce the memory cost of intermediate activations and membrane potentials during training of SNNs. We first extend the reversible architecture along temporal dimension and propose the reversible spiking block, which can reconstruct the computational graph of forward pass with a reverse process. On this basis, we present the RevSResNet and RevSFormer models, which are the reversible counterparts of the state-of-the-art SNNs. Through experiments on static and neuromorphic datasets, we demonstrate that the memory cost per image of our reversible SNNs does not increase with the network depth. In addition, RevSResNet and RevSFormer achieve comparative accuracies and consume much less GPU memory than their counterparts with roughly identical model complexity and parameters.

\section{Acknowledgments}
This work was supported by STI 2030-Major Projects 2021ZD0201403, in part by NSFC 62088101 Autonomous Intelligent Unmanned Systems.

\bibliography{main}

\appendixpage
\setcounter{theorem}{0}
\renewcommand\thesection{\Alph{section}}
\begin{table*}[tp]
\centering
\begin{tabular}{cccccc}
\hline \hline
Dataset                       & Network      & Blocks  & Channels       & Params(M) & FLOPS(G) \\ \hline
\multirow{2}{*}{static}       & MS ResNet18  & 2-2-2-2 & 64-128-256-512 & 11.22     & 2.22     \\
                              & RevSResNet21 & 1-1-1-1 & 64-128-256-448 & 11.05     & 2.38     \\ \hline
\multirow{2}{*}{static}       & MS ResNet34  & 3-4-6-3 & 64-128-256-512 & 21.33     & 4.64     \\
                              & RevSResNet37 & 1-2-3-2 & 64-128-256-448 & 23.59     & 4.66     \\ \hline
\multirow{2}{*}{neuromorphic} & MS ResNet20 & 3-3-3   & 16-32-64       & 0.27      & 0.42     \\
                              & RevSResNet24 & 1-2-2   & 16-32-48       & 0.26      & 0.43     \\ \hline
                              \hline
\end{tabular}
\caption{Detailed configurations of ResNet-like SNNs.}
\label{network}
\end{table*}
\section{Proof of the Theorem}
\begin{theorem}
\label{the1_app}
Consider a spiking reversible block with $T$ time steps, if the forward and reverse functions are formulated as Eq.~\ref{forward_p_app} and Eq.~\ref{reverse_p_app}, and outputs of forward process are feed into the reverse process, then $X^t$, $Y^t$ and all intermediate variables (including the intermediate activations and membrane potentials) in $\mathcal{F}^t$ and $\mathcal{G}^t$ in the forward process are the identical to those in the reverse process.
\end{theorem}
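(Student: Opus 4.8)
The plan is to argue by induction on the time step $t \in \{1,\dots,T\}$, exploiting the fact that a spiking neuron (Eqs.~\ref{LIF_model}--\ref{reset}, or Eq.~\ref{IF_model}) is a \emph{deterministic} map from its current synaptic input together with its previously inherited membrane potential to its output spike and its updated membrane potential. Consequently each module $\mathcal{F}^t$ and $\mathcal{G}^t$ --- being a composition of such neurons with fixed convolutional / fully-connected / attention layers --- is a deterministic function of its input at time $t$ and of the membrane-potential state of its neurons carried over from time $t-1$; in particular it reproduces the same internal activations and the same membrane potentials whenever both of these coincide. Since the only coupling between successive time steps is through these carried-over potentials (the red arrows in Figure~\ref{RevSNN}), it suffices to track them.

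First I would set up the inductive hypothesis $P(t)$: \emph{immediately before time step $t$ is processed, every neuron in $\mathcal{F}$ and in $\mathcal{G}$ holds the same membrane potential in the reverse pass as it did in the forward pass.} The base case $P(1)$ is exactly what the reset process buys us: in the forward pass all neurons start from the initial potential, and the reset restores that same initial state before the reverse pass begins, so the two states agree at $t=1$. For the inductive step, assume $P(t)$. The cached outputs $Y_1^t, Y_2^t$ fed to the reverse pass are by definition those produced in the forward pass; combined with $P(t)$ for $\mathcal{G}$, evaluating $\mathcal{G}^t(Y_1^t)$ in Eq.~\ref{reverse_p} reproduces every intermediate variable of $\mathcal{G}^t$ from the forward pass, reproduces the value $\mathcal{G}^t(Y_1^t)$, and leaves $\mathcal{G}$'s neurons in the forward pass's time-$t$ state. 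Then $X_2^t = Y_2^t - \mathcal{G}^t(Y_1^t)$ returns the forward $X_2^t$, because in the forward pass $Y_2^t = X_2^t + \mathcal{G}^t(Y_1^t)$ with the identical second term. Feeding this recovered $X_2^t$ into $\mathcal{F}^t$ --- whose neuron state at $t-1$ also agrees by $P(t)$ --- likewise reproduces all intermediate variables of $\mathcal{F}^t$, the value $\mathcal{F}^t(X_2^t)$, and $\mathcal{F}$'s time-$t$ state; and $X_1^t = Y_1^t - \mathcal{F}^t(X_2^t)$ recovers the forward $X_1^t$ by the first line of Eq.~\ref{forward_p}. Hence $X^t$, $Y^t$ and all intermediate variables of both modules at time $t$ coincide, and the neuron states of $\mathcal{F}$ and $\mathcal{G}$ now match the forward pass at time $t$, i.e.\ $P(t+1)$. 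A small remark I would include: the reverse pass evaluates $\mathcal{G}^t$ before $\mathcal{F}^t$ while the forward pass does the opposite, but since $\mathcal{F}$ and $\mathcal{G}$ act on disjoint sets of neurons this reordering is immaterial.

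I expect the only real obstacle --- and the point the write-up must handle carefully --- to be precisely this temporal statefulness: unlike the purely spatial RevNet setting, $\mathcal{F}^t$ and $\mathcal{G}^t$ are \emph{not} pure functions of their current inputs, so reversibility at a single time step is false on its own and must be proved jointly with the statement that the membrane potentials agree; that is what forces both the induction over $t$ and the reset step (the base case genuinely fails without the reset). The remaining ingredients --- verifying that Eqs.~\ref{LIF_model}--\ref{reset} (and Eq.~\ref{IF_model}) are indeed deterministic given the input and the prior potential, and that the surrogate used for $\Theta$ only enters the backward pass and hence does not perturb these forward/reverse values --- are routine, and I would dispatch them briefly.
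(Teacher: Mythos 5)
Your proposal is correct and follows essentially the same route as the paper's own proof: an induction over time steps whose invariant is the equality of the carried-over membrane potentials, with the reset process supplying the base case and the determinism of the neuron dynamics plus Eqs.~\ref{forward_p_app}--\ref{reverse_p_app} supplying the inductive step. Your additional remarks (the $\mathcal{G}$-before-$\mathcal{F}$ reordering being harmless because the two modules' neurons are disjoint, and the surrogate gradient not affecting forward/reverse values) are minor refinements the paper leaves implicit.
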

\begin{equation}
\label{forward_p_app}
\begin{aligned}
& Y_{f1}^t=X_{f1}^t+\mathcal{F}_f^t\left(X_{f2}^t\right) \\
& Y_{f2}^t=X_{f2}^t+\mathcal{G}_f^t\left(Y_{f1}^t\right)
\end{aligned}
\end{equation}

\begin{equation}
\label{reverse_p_app}
\begin{aligned}
& X_{r2}^t=Y_{r2}^t-\mathcal{G}_r^t\left(Y_{r1}^t\right) \\
& X_{r1}^t=Y_{r1}^t-\mathcal{F}_r^t\left(X_{r2}^t\right)
\end{aligned}
\end{equation}

\begin{proof}
To facilitate the distinction, we use $X_f^t$, $Y_f^t$, and $\mathcal{F}_f^t$, $\mathcal{G}_f^t$ to represent the input, output and spiking modules of forward process at time step $t$. And $Xr^t$, $Yr^t$, and $\mathcal{F}_r^t$, $\mathcal{G}_r^t$ represent the input, output and spiking modules of reverse process at time step $t$. Then the purpose is to prove that for each time step $t$, $X_f^t = X_r^t$,  $\mathcal{F}_f^t = \mathcal{F}_r^t$, and $\mathcal{G}_f^t = \mathcal{G}_r^t$ in the condition of $Y_f^t=Y_r^t$.

As spiking modules in different processes share the same parameter, the main difference lies in the membrane potentials. Let $V_f^t$ and $V_r^t$ be the membrane potentials respectively.

At time step $1$, as all membrane potentials are initialized to zero, then:
\begin{equation}
    V_f^1 = V_r^1 = 0
\end{equation}
So $\mathcal{F}_f^1 = \mathcal{F}_r^1$, and $\mathcal{G}_f^1 = \mathcal{G}_r^1$ at time step $1$.

Because $Y_{f1}^1=Y_{r1}^1$, which are the inputs of $\mathcal{G}_f^1 $ and $\mathcal{G}_r^1$, then all intermediate variables and outputs of $\mathcal{G}_f^1 $ and $\mathcal{G}_r^1$ are the same. That is:
\begin{equation}
    \mathcal{G}_f^1\left(Y_{f1}^1\right) = \mathcal{G}r^1\left(Y_{r1}^1\right)
\end{equation}

According to Eq.~\ref{forward_p_app} and~\ref{reverse_p_app}, we can prove that $X_{f2}^1=X_{r2}^1$, which are the inputs of $\mathcal{F}_f^1 $ and $\mathcal{F}_r^1$. Then all intermediate variables and outputs of $\mathcal{F}_f^1 $ and $\mathcal{F}_r^1$ are the same. That is 
\begin{equation}
    \mathcal{F}_f^1\left(X_{f2}^1\right) = \mathcal{F}_r^1\left(X_{r2}^1\right)
\end{equation}

Then we can prove that $X_{f1}^1=X_{r1}^1$.

So far, we have proven that $X_f^1 = X_r^1$,  $\mathcal{F}_f^1 = \mathcal{F}_r^1$, and $\mathcal{G}_f^1 = \mathcal{G}_r^1$. And all updates of membrane potentials in forward and reverse process are the same, which means:
\begin{equation}
\label{eq6}
    V_f^2 = V_r^2
\end{equation}

From the above proof, we can find that the equality of membrane potentials is the sufficient condition for the equality of other variables.
Similarly, according to Eq.~\ref{eq6}, we can also get $X_f^2 = X_r^2$,  $\mathcal{F}_f^2 = \mathcal{F}_r^2$, $\mathcal{G}_f^2 = \mathcal{G}_r^2$, and $V_f^3 = V_r^3$ at time step $2$.

By further reasoning, for each time step $t$, $X_f^t = X_r^t$, $\mathcal{F}_f^t = \mathcal{F}_r^t$, and $\mathcal{G}_f^t = \mathcal{G}_r^t$. And all intermediate variables in $\mathcal{F}^t$ and $\mathcal{G}^t$ of the forward and reverse process are the identical to each other.
\end{proof}

\section{Implementation details}

\subsection{Dataset introduction and pre-processing}
\subsubsection{CIFAR100 and CIFAR10}

CIFAR100 and CIFAR10 \cite{cifar} are two static image classification datasets that contain 60000 images each with a resolution of 32 × 32 pixels. CIFAR100 has 100 classes with 500 training images and 100 testing images per class, while CIFAR10 has 10 classes with 5000 training images and 1000 testing images per class. In the experiments of ResNet-like SNNs, we preprocess the images by applying random cropping with a size of 32 and a padding of 4, and horizontal flipping for data augmentation. We also normalize the images by subtracting the mean pixel intensity and dividing by the standard deviation, so that the images have zero mean and unit variance. While for transformer-like SNNs, we adopt a stronger augmentation following \cite{zhou2023spikingformer}, which includes random augmentation, mixup, and cutmix.

\subsubsection{CIFAR10-DVS}
CIFAR10-DVS \cite{li2017dvscifar10} is a dataset that consists of 10000 images in the form of spike trains. It is derived from recording the motion of CIFAR10 images on a LCD monitor using a DVS camera. We use the AER data pre-processing \cite{plif} to split each event into 10 or 16 slices (corresponding to time steps). The data augmentations are adopted from Spikingformer \cite{zhou2023spikingformer}.

\begin{table*}[tp]
\centering
\begin{tabular}{ccccccc}
\hline
Dataset                       & Structure              & Neuron & Epoch & Learning Rate & Optimizer & Batch Size \\ \hline
\multirow{2}{*}{CIFAR10(100)} & ResNet                 & IF     & 200   & 0.1           & SGD       & 32         \\
                              & Transformer            & LIF    & 310   & 0.0005        & AdamW     & 64         \\ \hline
CIFAR10-DVS                   & ResNet and Transformer & LIF    & 106   & 0.001         & AdamW     & 16         \\
DVS128 Gesture                & ResNet and Transformer & LIF    & 202   & 0.001         & AdamW     & 16         \\ \hline
\end{tabular}
\caption{Training settings of different structures on all datasets.}
\label{tb2}
\end{table*}

\begin{table*}[htp]
\centering
\scalebox{0.85}{
\begin{tabular}{cclllll}
\hline
Metrics & Architecture & RTX3080 (10G) & RTX3090 (24G) & RTX 4090 (24G) & Tesla V100 (32G) & Tesla A100 (40G) \\ \hline
\multirow{2}{*}{\begin{tabular}[c]{@{}c@{}}Maximum \\ Batch Size\end{tabular}} & MS ResNet34 & 74 & 239 & 237 & 337 & 417 \\
& RevSResNet37 & 214 \small$\uparrow2.89\times$ & 644 \small$\uparrow2.69\times$ & 644 \small$\uparrow2.72\times$ & 944 \small$\uparrow2.80\times$ & 1127 \small$\uparrow2.70\times$ \\ \hline
\multirow{2}{*}{\begin{tabular}[c]{@{}c@{}}Maximum \\ Time Step\end{tabular}} & MS ResNet34 & 4 & 16 & 15 & 22 & 28 \\
 & RevSResNet37 & 13 \small$\uparrow3.25\times$ & 46 \small$\uparrow2.88\times$ & 46 \small$\uparrow3.07\times$ & 66 \small$\uparrow3.00\times$ & 78 \small$\uparrow2.79\times$ \\ \hline
\multirow{2}{*}{\begin{tabular}[c]{@{}c@{}}Allocated Memory\\ (GB)\end{tabular} } & MS ResNet34 & 8.2 & 8.2 & 8.3 & 7.6 & 8.4 \\
 & RevSResNet37 & 4.5\small$\downarrow1.81\times$ & 4.5\small$\downarrow1.81\times$  & 4.6\small$\downarrow1.79\times$  & 4.2\small$\downarrow1.81\times$  & 4.7\small$\downarrow1.79\times$  \\ \hline
\multirow{2}{*}{\begin{tabular}[c]{@{}c@{}}Training Time\\ (seconds / epoch)\end{tabular}} & MS ResNet34 & 120.6 & 98 & 60.6 & 183.8 & 54.8 \\
 & RevSResNet37 & 157.4 \small$\uparrow1.31\times$ & 130.8 \small$\uparrow1.33\times$ & 81.4 \small$\uparrow1.34\times$ & 242.6 \small$\uparrow1.32\times$ & 78 \small$\uparrow1.42\times$ \\ \hline
\multirow{2}{*}{GPU-Utilization} & MS ResNet34 & 99\% & 99\% & 98\% & 99\% & 88\% \\
 & RevSResNet37 & 100\% & 99\% & 95\% & 99\% & 82\% \\ \hline
\end{tabular}}

\caption{Analysis on different GPU models. The last three rows are measured under batch size 64 and time step 4.}
\label{analysis}
\end{table*}

\subsubsection{DVS128 Gesture}
DVS128 Gesture \cite{dvs_gesture} is a neuromorphic dataset that consists of 11 categories of hand gestures from 29 people in three lighting conditions. The resolution of DVS128 Gesture is $128\times 128$. We use the same main AER data pre-processing and augmentations as for CIFAR10-DVS classification.

\subsection{Detailed architectures of reversible SNNs}
\subsubsection{ResNet-like SNNs}
The high-level structure of RevSResNet is the same as its non-reversible counterpart MS ResNet \cite{hu2021msresnet}. To ensure direct comparability with non-reversible counterparts, we match the model complexity (FLOPS in metric) and number of parameters as closely as possible. On static CIAFR datasets, we establish two comparisons (MS ResNet18 vs. RevSResNet21, MS ResNet34 vs. RevSResNet37). On neuromorphic CIFAR10-DVS and DVS128 Gesture datasets, we adopt the three-stage ResNet configuration and establish one comparison (MS ResNet 20 and RevSResNet24). The differences in configuration lie in the number of blocks and channel numbers in each stage, which are shown in Table~\ref{network}.


\subsubsection{Transformer-like SNNs}
For the transformer-like SNNs, due to our special design of basic reversible block, the only difference of RevSFromer and Spikingformer \cite{zhou2023spikingformer} lies in the type of basic block. Thus, the detailed configurations of RevSResNet follows Spikingformer.

\subsection{Training settings}
In our experiments, the implementation and GPU acceleration of all neurons are based on the PyTorch and SpikingJelly \cite{SpikingJelly} frameworks. SNNs for CIFAR10 and CIFAR100 datasets adopt a time step of 4. And SNNs for CIFAR10-DVS and DVS128 Gesture dataset adopts time steps of 10 and 16 for better comparison. The optimizer, spiking neuron and other training settings are listed in Table~\ref{tb2}. The code is available at https://github.com/mi804/RevSNN.git.

\section{Analysis on memory efficiency}
\subsection{Memory usage vs. embedding dimensions}
In the section Experiment, we have analyzed the relationship between GPU memory usage and network depth or time steps. In fact, memory usage is also related to the network width (channel number), which is reflected as the embedding dimension in the transformer-like SNNs. Here we explore the relationship between memory usage and embedding dimension on RevSformer and Spikingformer structures, as is shown in Figure~\ref{m_vs_dim}. The number of blocks for all networks is set to 4. As can be seen, for each model, memory usage increases linearly with a slope $m$. In our reversible SNNs, intermediate variables in the non-reversible parts and the output of each reversible sequence still need caching. However, through reversible architecture, we have greatly reduced the slope of memory usage growth from 0.32 of Spikingformer to 0.11 of our RevSFormer. For the largest embedding dimension of 1536, RevSFormer saves 2.94 $\times$ memory usage.

\subsection{Memory and training time analysis on different GPU models.}
In the section Experiment, we point out that the reversible architectures need roughly 33\% more computations than vanilla networks during training. In Table~\ref{analysis}, we further analyzed the memory benefits and compute overhead of our reversible SNNs under popular GPU models in the deep learning literature. We adopt maximum batch size, maximum time step, and allocated memory to demonstrate the memory benefits. As for the compute overhead, training time is adopted as the metric. As is seen, on GPUs with different memory, our RevSResNet37 can achieve $2.69-2.89\times$ maximum batch size and $2.79-3.25 \times$ maximum time steps with roughly $33\%$ more training time. Thus, with the fixed and proved $33\%$ compute burden, the memory reduction brought by our RevSNN is stable and can be generalized to different GPU architectures.

\begin{figure}[htp]
\centering
\includegraphics[width=1.\columnwidth]{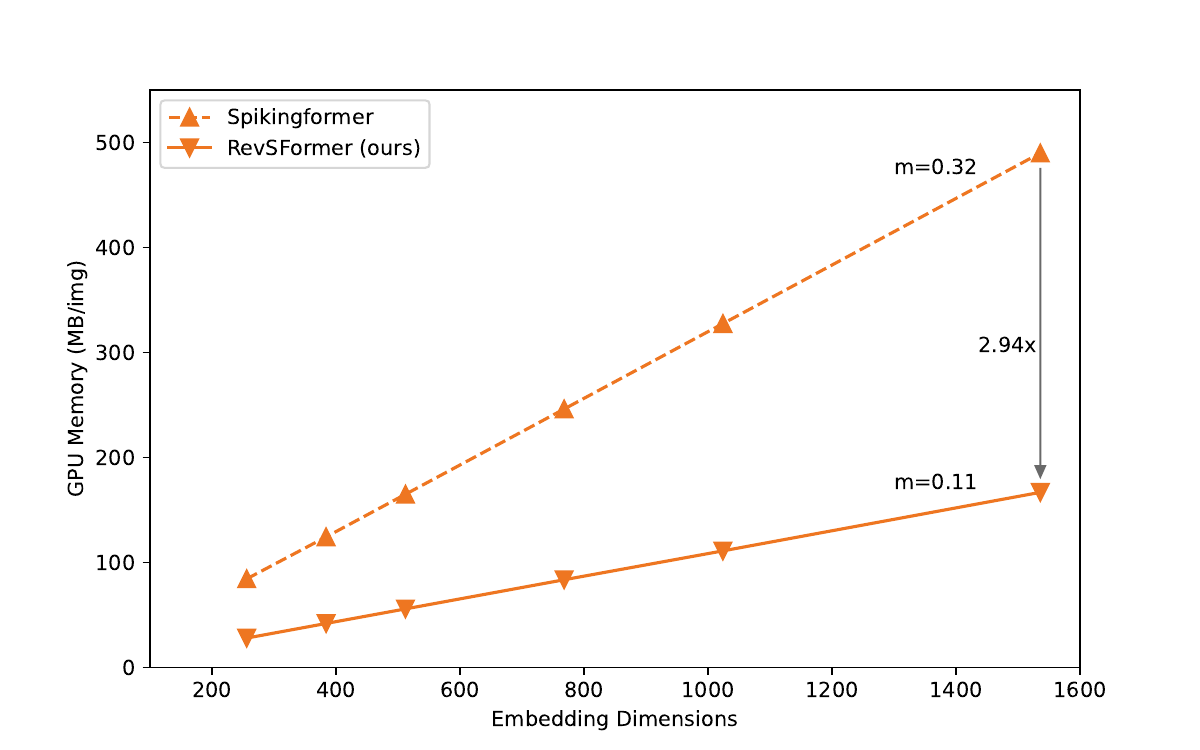} 
\caption{\small{The relationship between GPU memory and embedding dimension.}
}
\label{m_vs_dim}
\end{figure}

\end{document}